\newcommand{\R}{\mathbb{R}}
\newcommand{\E}{\mathop{\mathbb{E}}}
\newcommand{\Prob}{\mathbb{P}}
\newcommand{\1}{\mathbbm{1}}
\newtheorem{theorem}{Theorem}%[section]
\newtheorem{lemma}[theorem]{Lemma}
\newcommand{\qfil}{\textsc{qfil}}
\title{Quantile Filtered Imitation Learning}
\author{%
  David Brandfonbrener \qquad William F. Whitney \qquad Rajesh Ranganath \qquad Joan Bruna\\
  Department of Computer Science, Center for Data Science\\
  New York University \\
  \texttt{david.brandfonbrener@nyu.edu} \\
}
\begin{document}

\maketitle

\begin{abstract}
    We introduce quantile filtered imitation learning (\qfil), a novel policy improvement operator designed for offline reinforcement learning. \qfil\ performs policy improvement by running imitation learning on a filtered version of the offline dataset.
    The filtering process removes $ s,a $ pairs whose estimated Q values fall below a given quantile of the pushforward distribution over values induced by sampling actions from the behavior policy.
    The definitions of both the pushforward Q distribution and resulting value function quantile are key contributions of our method.
    We prove that \qfil\ gives us a safe policy improvement step with function approximation and that the choice of quantile provides a natural hyperparameter to trade off bias and variance of the improvement step.
    %Importantly, \qfil\ is very simple and leverages supervised learning subroutines to perform policy improvement. 
    Empirically, we perform a synthetic experiment illustrating how \qfil\ effectively makes a bias-variance tradeoff and we see that \qfil\ performs well on the D4RL benchmark. 
\end{abstract}

\section{Introduction}

Offline RL offers tantalizing promise in diverse applications from robotics to healthcare \citep{levine2020offline}.
However, offline RL has some fundamental limitations. In particular, policies cannot in general extrapolate beyond the coverage of the dataset since unknown parts of state-action space may be dangerous. 
Subject to this safety constraint, algorithms still face a bias-variance tradeoff when attempting to perform a policy improvement step. 
Explicitly, we can reduce the variance of our learned policy by remaining closer to the behavior policy where we have more data. But this comes at the cost of bias away from the optimal policy and towards the behavior.

In this paper we propose a novel policy improvement operator called quantile filtered imitation learning (\qfil) to safely make this bias-variance tradeoff. Our improvement operator can be coupled with any value estimation technique to get an offline RL algorithm. 
Simply put, \qfil\ attempts to imitate actions from the dataset that perform well while ignoring those that perform poorly. To decide which actions to imitate we use the pushforward distribution induced by pushing samples from the behavior policy through the estimated Q function and then only imitate actions whose Q estimates exceed some quantile of the pushforward distribution. 
Importantly, since the policy learning step is simply imitation learning, we ensure that the policy has no incentive to choose actions outside of the data distribution, which provides safety. 
% The choice of quantile allows \qfil\ to make more aggressive updates when we have enough data, i.e. the quantile effectively governs the bias-variance tradeoff. 
Selecting a high quantile allows us to make a less biased and more aggressive update by imitating a smaller subset of the data, while a low quantile provides a lower-variance update.
%The quantile is an especially intuitive hyperparameter because it allows the practitioner to choose how much data to use to learn the policy. For example, setting $ \tau = 0.9$ means that the training set for policy learning will include 10\% of the full dataset. 

We provide both theoretical and empirical arguments for the efficacy of \qfil. On the theory side, we prove a safe policy improvement guarantee that illustrates how the quantile $ \tau$ controls the bias-variance tradeoff. On the empirical side, we first provide a toy experiment that demonstrates how the quantile $ \tau$ controls the bias-variance tradeoff. Then we demonstrate that \qfil\ can achieve performance competitive with the state of the art on the D4RL \citep{fu2020d4rl} benchmark on the MuJoCo tasks using one-step on-policy value estimation and on the Ant Maze tasks using iterative off-policy value estimation.
%We also show that \qfil\ outperforms similar methods that use standard value functions instead of value function quantiles \cite{wang2018exponentially, wang2020critic, peng2019advantage, nair2020accelerating}.

% \newpage

\begin{figure}[t!]
    \centering
    \includegraphics[width=0.9\textwidth]{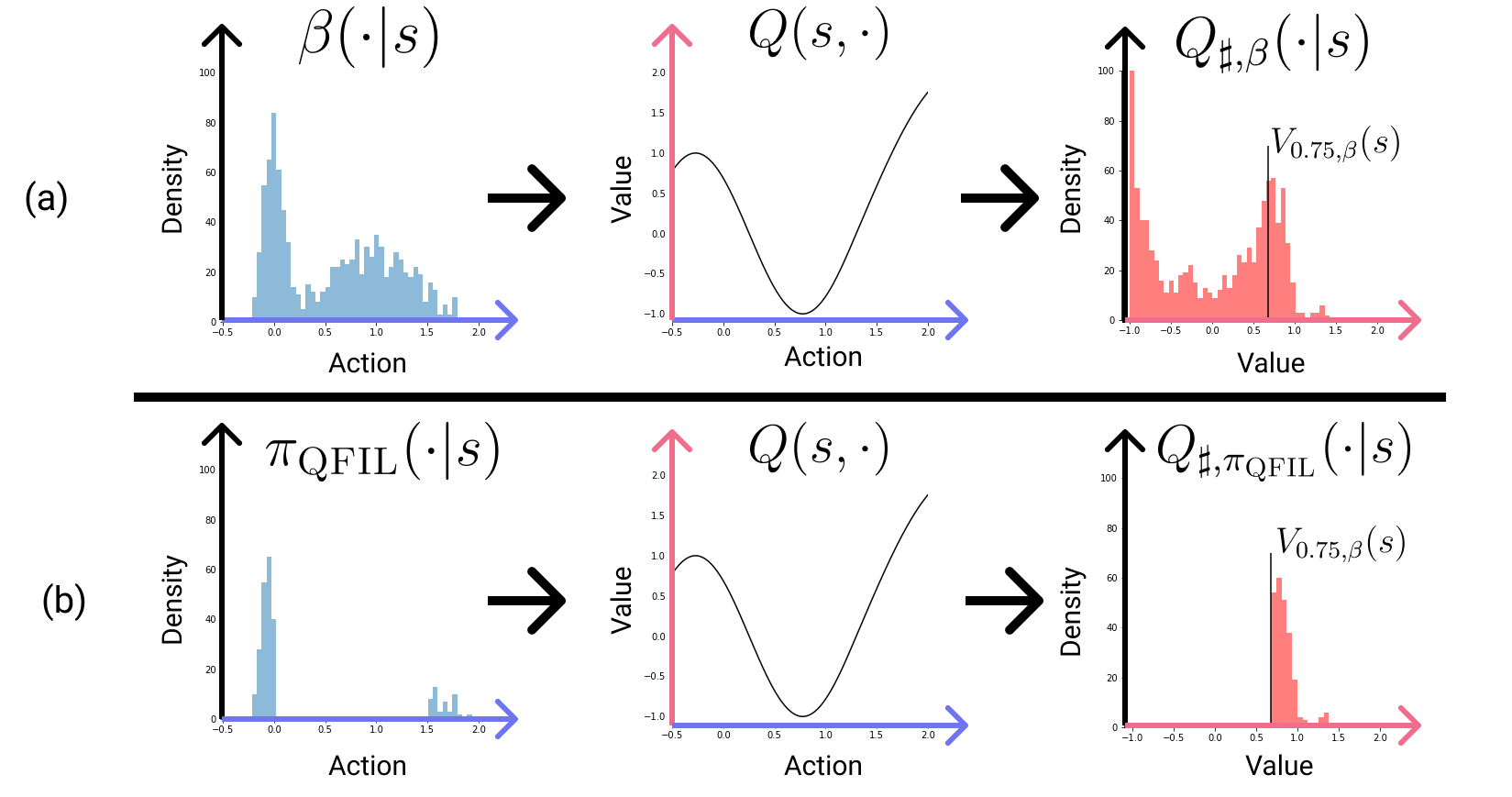}
    \caption{(a) An illustration of the pushforward distribution $ Q_{\sharp, \beta}(\cdot|s)$ induced by pushing samples from $ \beta(\cdot | s)$ through $Q(s,\cdot)$. $ Q_{\sharp, \beta}(\cdot|s)$ is the distribution of predicted Q-values generated by sampling an action from the policy $\beta$ and evaluating it using $Q$. The value function quantile $ V_{0.75, \beta}(s)$ is the 0.75 quantile of this pushforward distribution. (b) An illustration of how \qfil\ defines an improved policy $ \pi_{\qfil}$ that only selects actions for which $ Q(s,a) \geq V_{0.75, \beta}(s)$.}
    \label{fig:quantile}
\end{figure}

\section{Setup}\label{sec:setup}

\begin{wrapfigure}[9]{r}{0.6\textwidth}
\vspace{-0.6cm}
\begin{algorithm}[H]
    \SetKwInOut{Input}{input}
    \Input{$ K$, dataset $ D_N$,  estimated behavior $ \hat \beta$}
             
    Set $\pi_0 = \hat \beta$. Initialize $ \widehat Q^{\pi_{-1}}$ randomly. 
             
    \For{k = 1, \dots, K}{
        Policy evaluation: $ \widehat Q^{\pi_{k-1}} = \mathcal{E}(\pi_{k-1}, D_N, \widehat Q^{\pi_{k-2}})$ 
              
        Policy improvement: $ \pi_{k} = \mathcal{I}(\widehat Q^{\pi_{k-1}}, \hat \beta, D_N, \pi_{k-1})$
        }
    \caption{OAMPI}
    \label{alg:oampi}
\end{algorithm}
\end{wrapfigure}

We will consider an offline RL setup as follows. Let $ \mathcal{M} = \{\mathcal{S}, \mathcal{A}, P, R, \gamma \}$ be a discounted infinite-horizon  MDP. Define $ r(s,a) = \E[R(s,a)]$.
Rather than interacting with $\mathcal{M}$, we only have access to a dataset $ \mathcal{D}$ of $ N $ tuples of $ (s_i, a_i, r_i)$ collected by trajectories from some behavior policy $ \beta$ with initial state distribution $ \rho$. 
Define $ d_\pi(s) := (1-\gamma) \sum_{t=0}^\infty \gamma^t \Prob_{\rho, P, \pi}(s_t = s)$ to be the discounted state visitation distribution under a policy $ \pi$. The objective is to maximize the expected return $ J $ of the learned policy $J(\pi) := \E_{\rho, P, \pi}\left[ \sum_{t=0}^\infty \gamma^t r(s_t, a_t) \right]$.

Following \cite{brandfonbrener2021offline} we consider a generic template for model free offline RL algorithms as offline approximate modified policy iteration (OAMPI). This template, shown in Algorithm \ref{alg:oampi}, alternates between steps of policy evaluation and policy improvement using the fixed offline dataset. This paper focuses on proposing a specific version of the improvement step that can be used in tandem with various evaluation steps. Moreover, as we will show, this improvement operator can be useful both in the one-step ($ K=1$) and iterative ($K >> 1$) regimes. This template is generic enough to capture essentially all related work by just replacing the $ \mathcal{E}$ and $ \mathcal{I}$ operators as we will discuss in more detail in Section \ref{sec:related} after introducing \qfil.

\section{Our algorithm: Quantile Filtered Imitation Learning (\qfil)}

In this section we will introduce our main algorithmic contributions. First, we will formally define value function quantiles in terms of pushforward distributions over Q values. Then we define quantile filtered imitation learning (\qfil). Finally, we provide some theoretical analysis of \qfil\ to demonstrate how the quantile governs a bias-variance tradeoff in the policy improvement step. 

\subsection{Pushforward Q distributions and value function quantiles}

The main mathematical objects that we introduce in this work are the pushforward Q distribution $ Q_{\sharp, \pi}(\cdot|s)$ for some action-value function $ Q$ and policy $ \pi$ and the resulting value function quantile $ V_{\tau, \pi}(s)$ at quantile $ \tau$. These concepts are illustrated in Figure \ref{fig:quantile}. Intuitively, this pushforward distribution $ Q_{\sharp, \pi}(\cdot |s)$ is the distribution over $ \R$ that is generated when we sample actions from $ \pi$ conditioned on $ s $ and then ``push'' those actions ``forward'' through the function $ Q(s, \cdot)$. Then $ V_{\tau, \pi}$ simply takes a quantile of this induced distribution over $ \R$. 

More formally, we can define the pushforward distribution over values at a state $ s $ induced by any $ \pi $ and $ Q $ evaluated at any measurable set of values $ B \subseteq \R$ as:
\begin{align}
    Q_{\sharp, \pi}(B|s) := \pi(Q^{-1}(B;s)| s)
\end{align}
where $ Q^{-1}$ gives the pre-image of $ B $ under $ Q$ so that for any $ v \in B$, we have $ Q^{-1}(v;s) \subseteq \mathcal{A}$ and for any $ a \in Q^{-1}(v;s)$, we have $ Q(s,a) = v$.

Equivalenty, since $Q_{\sharp, \pi}(\cdot|s)$ is a real-valued distribution, we can define it in terms of its CDF. Let $ X \sim Q_{\sharp, \pi}(\cdot|s)$, then we have that 
\begin{align}
    \Prob(X \leq v)  = \Prob_{ a \sim \pi|s}(Q(s,a) \leq v).
\end{align}
Now we can directly define the value function quantile $ V_{\tau, \pi}(s)$ by taking the $ \tau $ quantile of the pushforward Q distribution:
\begin{align}
    V_{\tau, \pi}(s) :&= \sup \left \{ v\in \R \ \ s.t.\ \ \Prob_{X \sim Q_{\sharp, \pi}(\cdot|s)}(X \leq v) \leq \tau \right\} \\&= \sup \left \{ v\in \R \ \ s.t.\ \ \Prob_{a \sim \pi|s}(Q(s,a) \leq v) \leq \tau \right\} 
\end{align}

\paragraph{Comparison to distributional RL.} We want to emphasize that the value function quantile is a very different object from the quantile value function considered in work on distributional RL \citep{dabney2018distributional}. That work attempts to approximate the full distribution of stochastic returns $ Z^\pi$ for some policy $ \pi$ by computing various quantiles of the distribution over $ Z^\pi$ conditioned on $ s,a$ using a bellman backup. These objects consider stochasticity over \emph{entire trajectories}. In contrast, we are using the standard Q functions and the pushforward distribution over Q values $ Q_{\sharp, \pi}(\cdot|s)$ at $ s$ only considers stochasticity induced by $ \pi$ at the first step in a trajectory.
 
\subsection{The \qfil\ policy improvement operator}

\qfil\ defines a novel policy improvement operator that can be incorporated into the generic OAMPI framework introduced in Section \ref{sec:setup}. 
The operator simply filters out data where the estimated Q value falls below the $ \tau$ value function quantile. Explicitly, we define the policy update as follows:
\begin{align}
    \mathcal{D}_\tau = \left\{s, a \in \mathcal{D} \ \ s.t.\ \ \widehat Q^k(s,a) > \widehat V^{k}_{\tau, \beta}(s) \right\}, \qquad \pi_{k+1} = \arg\max_{\pi} \sum_{s,a \in \mathcal{D_\tau}} \log \pi(a|s). %\1\left[\widehat V^{k}_{\tau, \beta}(s) < \widehat Q^k(s,a)\right]
\end{align}

Note that importantly, we use the pushforward distribution induced by $ \beta$ to compute the value function quantile. This ensures that we are evaluating our estimated value functions on data from the data-generating distribution.
Also note that we can combine the \qfil\ policy improvement operator with any policy evaluation operator to get $ \widehat Q^k$. 

\paragraph{Practical benefits of \qfil.} There are three main benefits to the \qfil\ improvement operator. 
\begin{enumerate}
    \item Safety: the \qfil\ objective only encourages imitation of actions already in the dataset. This is useful in domains where we know that the behavior policy is safe but that there may be unsafe actions if we attempt to extrapolate beyond what we have seen in the data.
    \item Bias-variance tradeoff: the quantile provides an effective and intuitive knob to trade off bias and variance. As we will see in theory and practice below, the quantile essentially allows us to choose how much data we use to estimate a new policy. We can get a small sample of data from a near optimal policy, but at the cost of high variance. Or we can use a large sample, but at the cost of a bias towards the behavior policy and away from an optimal policy. The quantile is intuitive since for example setting $ \tau = 0.9$ means we expect to have about 10\% of the data from $ \mathcal{D}$ present in $ \mathcal{D}_\tau$. A practitioner can usually have a decent sense of how much data is needed to reliably solve the supervised policy estimation problem and can use this to propose reasonable settings of the quantile. 
    \item Optimization: the \qfil\ policy learning step only uses standard supervised learning. This means we get access to all the tools, tricks, and hyperparameters that have been designed for optimizing supervised learning problems. In contrast, policy improvement operators that attempt to optimize Q directly 
    %(like BRAC \citep{wu2019behavior} for one example) 
    face a different optimization problem that machine learning tools have not been optimized for. 
\end{enumerate}

\paragraph{Implementation-level decisions.} The main implementation-level decisions left to instantiate the algorithm are to (1) decide how to estimate $ Q $, and (2) how to estimate $ V_{\tau, \beta}$. For $Q$ we experiment with both on-policy estimates of $ Q^\beta$ using SARSA Q estimation and off-policy estimates of $ Q^{\pi_k}$ using uncorrected DDPG-style Q estimation where we sample $ a' \sim \pi_k$. To compute $ V_{\tau, \beta}$ we use a sampling-based approach. Specifically, we take $ M$ samples from an estimated behavior policy $ \hat \beta$ and then compute the empirical quantile after pushing the $ M $ samples through $ Q$. One final note is that when implementing \qfil\ we can replace the explicit computation of $ \mathcal{D}_\tau$ by computing 0-1 weights for each datapoint as $ \1[\widehat Q^k(s,a) > \widehat V^{k}_{\tau, \beta}(s)]$.  
%We leave to future work a more thorough evaluation of a quantile regression-based approach, but in preliminary experiments we found the sampling-based approach to be more effective.

\subsection{Theoretical analysis}

Now that we have laid out the algorithm we will provide some brief theoretical justification for \qfil. Specifically, we will show that the quantile provides an effective way to trade off the variance of the improvement step caused by finite data and errors in the approximation and estimation of the Q function and policy with bias induced by imitating suboptimal actions. 
Here we will only prove a guarantee for the one-step variant of the algorithm that uses on-policy value estimation of $ Q^\beta$, and we leave a treatment of off-policy learning for future work. 
%While we do consider an off-policy variant in the experiments, there exist lower bounds for off-policy Q estimation that suggest it may be hard in general [CITE]. 

To make our analysis we will need a few definitions and assumptions. Namely,
\begin{enumerate}
    \item Assume value estimation error is bounded $ \E_{\substack{s\sim d_\beta\\ a\sim \beta|s}} [(Q^\beta(s,a) - \widehat Q^\beta(s,a))^2] \leq \varepsilon_Q(N)$, whp.
    \item Assume that for any policy $ \pi$ we can produce an estimate $ \hat \pi$ based on $ N $ samples of $ s \sim d, a \sim \pi. | s$ such that: $\E_{s\sim d}[D_{TV}(\hat \pi(\cdot|s)\| \pi(\cdot|s))] \leq \varepsilon_\pi(N)$, whp.
    \item Assume value function quantile estimation such that $ \Prob_{a \sim \beta|s}(\widehat Q^\beta(s,a) \leq \widehat V_{\tau,\beta}(s)) = \tau$.\footnote{This assumption is stronger than the first two. We conjecture that it can be weakened to gracefully allow for approximation of the value function quantile, but leave this to future work.}
    \item Let $ Q^\beta_{\max} = \sup_{s,a} Q^\beta(s,a)$ and $ A^{\beta}_{\max} = \sup_{s,a} A^\beta(s,a)$
    \item Define the \qfil\ policy for $ \widehat Q^\beta$ by $\pi_\tau(a|s) := \beta(a|s) \frac{\1[\widehat Q^\beta(s,a) \geq \widehat V_{\tau, \beta}(s)]}{1 - \tau}$.
\end{enumerate}

Now we are ready to state the main result which is a lower bound on the expected improvement of our learned policy $ \hat \pi$ over the behavior $ \beta$. The proof is deferred to Appendix \ref{sec:proof}.

\begin{restatable}{proposition}{advantage}\label{thm:advantage}
Under the above assumptions and letting $ W_1$ denote the Wasserstein-1 distance, \qfil\ returns a policy $ \hat \pi$ such that whp
\begin{align}
   \left(J(\hat \pi) - J(\beta)\right) &(1-\gamma) \geq \E_{s\sim d_\beta} \left[ W_1\left(\widehat Q^\beta_{\sharp, \pi_\tau}(\cdot|s), \ \  \widehat Q^\beta_{\sharp, \beta}(\cdot|s)\right) \right]\\&- 2 \left(\sqrt{\frac{ \varepsilon_Q(N)}{1-\tau}} \ +\  \frac{\varepsilon_\pi((1-\tau) N)Q^\beta_{\max} }{1-\tau} + \frac{ \gamma A^\beta_{\max}}{1-\gamma}\bigg(\tau + \varepsilon_\pi((1-\tau) N)\bigg)\right). 
\end{align}
\end{restatable}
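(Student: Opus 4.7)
The plan is to combine the standard performance difference lemma with a careful decomposition that isolates the Wasserstein term from three sources of error: distribution shift from $d_{\hat\pi}$ to $d_\beta$, replacement of $\hat\pi$ by $\pi_\tau$, and replacement of $Q^\beta$ by $\widehat Q^\beta$. As a preliminary step I would verify that the filtered dataset $\mathcal{D}_\tau$ really contains $\approx (1-\tau) N$ samples drawn from $d_\beta \cdot \pi_\tau$: Assumption~3 ensures that conditional on any $s$, an action is retained with probability exactly $1-\tau$, so the marginal on $s$ in $\mathcal{D}_\tau$ remains $d_\beta$ and the conditional on $a$ is $\pi_\tau$, licensing the application of the policy estimation assumption with sample size $(1-\tau)N$.

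First I would apply the performance difference lemma $(J(\hat\pi) - J(\beta))(1-\gamma) = \E_{s \sim d_{\hat\pi}, a \sim \hat\pi}[A^\beta(s,a)]$ together with a standard state-visitation bound such as $\|d_{\hat\pi} - d_\beta\|_1 \leq \tfrac{2\gamma}{1-\gamma} \E_{s \sim d_\beta}[D_{TV}(\hat\pi(\cdot|s), \beta(\cdot|s))]$ to pay the distribution-shift penalty and move the outer expectation to $d_\beta$. Splitting $D_{TV}(\hat\pi, \beta) \leq D_{TV}(\hat\pi, \pi_\tau) + D_{TV}(\pi_\tau, \beta)$ produces the $\tau + \varepsilon_\pi((1-\tau)N)$ factor: the identity $D_{TV}(\pi_\tau, \beta) = \tau$ is immediate from the definition of $\pi_\tau$, and the other piece comes from the policy estimation assumption.

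Second, I would insert the intermediate quantities $\E_{a \sim \pi_\tau}[\widehat Q^\beta(s,a)]$ and $\E_{a \sim \beta}[\widehat Q^\beta(s,a)]$ into $\E_{s \sim d_\beta, a \sim \hat\pi}[Q^\beta(s,a)] - \E_{s \sim d_\beta, a \sim \beta}[Q^\beta(s,a)]$, which is the advantage term written out using $V^\beta(s) = \E_{a \sim \beta}[Q^\beta(s,a)]$. The policy swap $\hat\pi \to \pi_\tau$ costs at most $Q^\beta_{\max} \, \varepsilon_\pi((1-\tau)N)$ by a TV/$L^\infty$ inequality. The value swap on the $\pi_\tau$-side uses Cauchy--Schwarz together with the importance ratio bound $\pi_\tau/\beta \leq 1/(1-\tau)$, yielding $\E_{s \sim d_\beta, a \sim \pi_\tau}[|Q^\beta - \widehat Q^\beta|] \leq \sqrt{\varepsilon_Q(N)/(1-\tau)}$; the $\beta$-side swap is controlled by $\sqrt{\varepsilon_Q(N)}$ via an easier Cauchy--Schwarz. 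Both are absorbed into the $2\sqrt{\varepsilon_Q(N)/(1-\tau)}$ term.

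Third, I would identify the remaining quantity with the Wasserstein distance. Since $\pi_\tau$ is precisely $\beta$ conditioned on $\widehat Q^\beta(s,a) \geq \widehat V_{\tau,\beta}(s)$, the pushforward $\widehat Q^\beta_{\sharp,\pi_\tau}(\cdot|s)$ first-order stochastically dominates $\widehat Q^\beta_{\sharp,\beta}(\cdot|s)$. For real-valued distributions with stochastic dominance, the formula $W_1(\mu,\nu) = \int |F_\mu - F_\nu|$ reduces to the signed difference $\E[\mu] - \E[\nu]$, so $\E_{s \sim d_\beta}[W_1(\widehat Q^\beta_{\sharp,\pi_\tau}, \widehat Q^\beta_{\sharp,\beta})]$ equals the difference of pushforward means that remains after the swaps.

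The main obstacle I expect is the Q-estimation error on the $\hat\pi$-expectation: naively importance-weighting from $\hat\pi$ directly to $\beta$ would require a bound on $\hat\pi/\beta$ that is not available, so the decomposition must first route through $\pi_\tau$ (whose ratio with $\beta$ is uniformly bounded by $1/(1-\tau)$) and only then invoke Cauchy--Schwarz. Sequencing the three approximations (policy swap, value swap, Wasserstein identification) so that each incurs only a controllable error is the crux of the argument.
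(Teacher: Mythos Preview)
Your proposal is correct and follows essentially the same route as the paper. The paper packages your ``performance difference lemma plus state-visitation bound'' step by citing the lower bound of Achiam et al.\ (which is exactly that combination), and then proves a separate lemma decomposing $\E_{s\sim d_\beta, a\sim\hat\pi}[A^\beta(s,a)]$ into the same three pieces you describe (Wasserstein term via the CDF/stochastic-dominance argument, $Q$-error via Cauchy--Schwarz with the importance ratio $\pi_\tau/\beta\le 1/(1-\tau)$, and policy-swap via TV). One small difference: for the policy-swap term you note that Assumption~3 forces the filtered state marginal to remain $d_\beta$, giving a cost of order $Q^\beta_{\max}\varepsilon_\pi((1-\tau)N)$; the paper instead importance-weights from $s\mid E$ back to $d_\beta$ and picks up an extra $1/(1-\tau)$ factor. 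Your observation yields a strictly tighter bound and hence still implies the stated proposition.
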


Let's examine each term in the bound. The first Wasserstein distance term expresses the inverse of the bias: it is large for large values of $ \tau$ that approach higher value policies and it decreases to 0 for lower values of $ \tau$ that approach pure imitation. In other words, this term quantifies the improvement of the \qfil\ policy $ \pi_\tau$ over the behavior $ \beta$ according to the estimated Q function $ \widehat Q^\beta$. Note that because the Wasserstein distance is always at least zero, so is this term. Perhaps this term is easiest to understand in pictures as it measures the distance between the real-valued distributions shown in red in the rightmost panels of Figure \ref{fig:quantile}. Setting a higher quantile $ \tau$ will move more mass to higher estimated values from the pushforward behavior distribution $ \widehat Q^\beta_{\sharp, \beta}(\cdot|s)$ to get to the pushforward \qfil\ distribution $ \widehat Q^\beta_{\sharp, \pi_\tau}(\cdot|s)$. The precise scaling with $ \tau$ will be highly dependent on $ \widehat Q^\beta$ and $ \beta$. 

The second term encapsulates the variance, which is the error due to the finite sample size, approximation error of the Q function, approximation error of our imitation learning, and effects of distribution shift.
Importantly, this term becomes more negative for larger values of $ \tau$. Explicitly, if we assume a $ \frac{1}{\sqrt{N}}$ scaling for the $ \varepsilon$ term then we can expect the variance bound to scale with $ \frac{1}{(1-\tau)^{3/2}} + \tau$.  This captures the intuition that staying closer to the behavior by setting a small value of $ \tau$ will reduce the variance. This bound is likely often much too pessimistic about the impacts of distribution shift, but it is beyond the scope of this paper to provide a more intricate problem-dependent analysis.

\section{Related work}\label{sec:related}

The most closely related line of work presents various filtered and weighted imitation learning improvement operators. This includes MARWIL \citep{wang2018exponentially}, CRR \citep{wang2020critic}, AWR \citep{peng2019advantage}, AWAC \citep{nair2020accelerating}, BAIL \citep{chen2020bail}, and ABM \citep{Siegel2020Keep}.
While all of these algorithms have various differences, most of them perform policy improvement using exponentially weighted imitation learning similar to:
\begin{align}
    \pi_{k+1} = \arg\max_{\pi} \sum_{s,a \in \mathcal{D}} \exp[\alpha (Q(s,a) - V(s))] \log \pi(a|s).
\end{align}
The key difference between \qfil\ and these papers is the introduction of the value function quantile into the advantage calculation. The quantile $ \tau$ replaces the hyperparameter $ \alpha$ from the exponentially weighted family of algorithms and provides a more graceful way to trade off bias and variance. 
In some sense this is a numerical issue where larger values of $ \alpha$ can cause instability due to very large weights. In practice implementations need to clip the weights and then as $ \alpha$ increases the weight just approaches the hard threshold $ \1[Q(s,a) - V(s) > 0]$ multiplied by some constant. 
Moreover, the approaches are not mutually exclusive and can be combined by using the value function quantile $ V_{\tau, \beta} $ in place of  $V $, but at the cost of now having to tune bothe $ \tau$ and $ \alpha$.

Another relevant piece of related work is \cite{chen2021decision} which in addition to proposing the decision transformer proposes the \%BC baseline. This baseline runs filtered imitation learning where the epsiodic return is used in place of the Q function and a constant value function that does not depend on the state, but does try to estimate a quantile of the distribution over returns. In contrast, \qfil\ uses a learned Q function and state-dependent value function quantile estimate. 

The other relevant detail is the use of hard filtering instead of expoential weighting. This is motivated by \cite{byrd2018effect} which shows that importance weighting is ineffective when training large, flexible neural network models. Some algorithms (CRR, ABM, and BAIL, \%BC) also use hard filtering like \qfil. None of them use value function quantiles. CRR proposes a variant called CRR-max that uses the max of $ m$ samples to effectively estimate the $ \frac{m-1}{m}$ quantile. BAIL uses the ``upper envelope'' of the data, which relies more on generalization of function approximation and as a result may exclude all datapoints at some states when the estimate of the upper envelope exceeds the 1.0 quantile of the pushforward distribution. \qfil\ provides a more flexible and consistent mechanism to trade off bias and variance. An extended discussion of other, less closely related offline policy improvement operators can be found in Appendix \ref{sec:related_cont}.

In concurrent work, \cite{kostrikov2021offlineb} propose a similar method that uses exponentiated expectile advantage functions to weight the imitation loss. However, instead of first learning a standard Q function and then computing a value function quantile, they modify the Bellman backup directly so that the Q estimates are not estimating the Q values of a particular policy.
Moreover, at an implementation level, we use hard filtering and estimate quantiles from samples, while \cite{kostrikov2021offlineb} use exponential weighting and expectile regression. Finally, because our algorithm does not modify the Bellman backup, it is simpler to provide a more rigorous theoretical analysis.

% Recent work has also emphasized the issues of off-policy evaluation in offline RL \citep{brandfonbrener2021offline, gulcehre2021regularized}. As we will show in the experiments, the \qfil\ improvement operator can be combined effectively with on-policy estimation yielding small gains. Moreover, we will see that by ensuring that we choose actions from the dataset policies learned with \qfil\ can sometimes be effectively used with off-policy evaluation.

\section{Experiments}

\subsection{Synthetic experiment}

To illustrate the bias-variance tradeoff we discussed in the prior section, we created a simple synthetic problem. Since we are focused on the policy improvement step rather than the evaluation step, we will use a finite horizon environment with horizon equal to 1 (i.e. a contextual bandit). We use real-valued, continuous, 1-dimensional states and actions and learn $ \hat \beta, \widehat Q^,$ and $ \hat \pi$ with small neural networks of width 50 and depth 2. The data is generated according to:
\begin{align*}
    s \sim U([0,1]), \qquad a|s \sim \frac{s + \epsilon}{2}, \epsilon \sim U([0,1]), \qquad r(s,a) = \begin{cases}1 - |a - (1-s)|& a \in [\frac{s}{2}, \frac{s+1}{2}]\\ -1 & otherwise  \end{cases}
\end{align*}
Note that the reward function penalizes actions with zero probability under the behavior. 

\begin{figure}[h]
    \centering
    \includegraphics[width=0.9\textwidth]{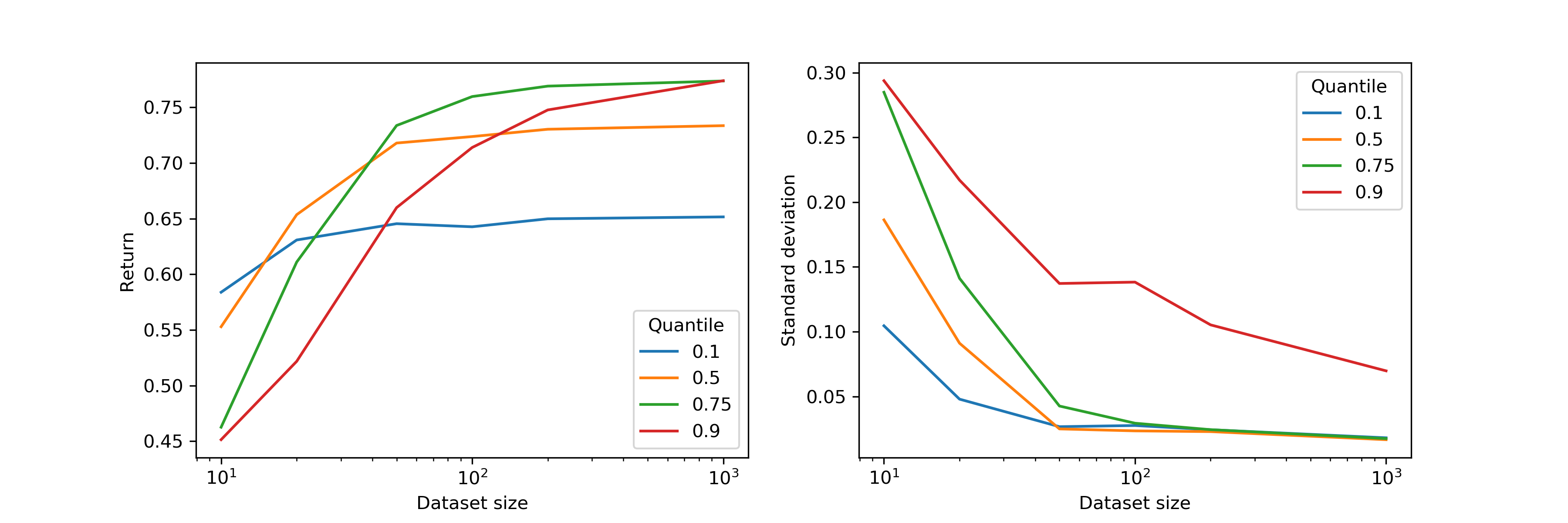}
    \caption{Figures showing the mean (left) and standard deviation (right) across 50 random seeds of the return of the policies learned by \qfil\ on the synthetic problem with various quantiles and datasets. The plots illustrate how the quantile $ \tau$ governs a bias-variance tradeoff.}
    \label{fig:toy}
\end{figure}

We simulate several experiments with various dataset sizes and quantiles. Full experimental details can be found in Appendix \ref{sec:details}. We visualize the results in Figure \ref{fig:toy} demonstrating that higher quantiles have more variance but less bias. As a result, different quantiles are optimal depending upon the dataset size.

\subsection{D4RL experiments}

Now we move on to larger scale experiments on the D4RL benchmark \citep{fu2020d4rl}. Full experimental details can be found in Appendix \ref{sec:details}.

\paragraph{One-step MuJoCo.} Recent work of \cite{brandfonbrener2021offline} has showed that simply performing one step of policy improvement ($ K=1$ in OAMPI) yields state of the art performance on most of the MuJoCo tasks in the D4RL benchmark excepting the ``random'' datasets where iterative algorithms perform better. Our first experiment shows that replacing the exponentiated advantage function used by MARWIL and CRR yields slightly better performance on many of the MuJoCo tasks and can be seen in Table \ref{tab:onestep-mujoco}. We also compare to a \% BC baseline that just runs behavior cloning on the trajectories with the highest returns (where we sweep over different values for the amount of data to keep). By using only the one-step algorithm this experiment attempts to isolate the effect of the \qfil\ operator for policy improvement from effects of off-policy evaluation.  

%\begin{wraptable}[14]{r}{0.65\textwidth}
    \begin{table}[h]
    %\vspace{-1.1cm}
    \centering
    \caption{Results for one-step algorithms on MuJoCo tasks from the D4RL benchmark. We run 3 seeds and tune over 4 hyperparameter values. We report the mean and standard deviation across seeds using 100 evaluation episodes for the best hyperparameter. Exp-adv refers to exponentiated advantage as in MARWIL or CRR, here limited to one step.}
    \begin{small}
    \begin{tabular}{lcccc}
        \toprule
        Dataset & BC & \% BC & One-step Exp-adv & One-step \qfil \\
        \midrule
        halfcheetah-med & 41.9  $\pm$  0.1 & 42.3  $\pm$  0.2
                        & 48.4  $\pm$  0.1
                        & \textbf{50.6  $\pm$  0.0} \\
        walker2d-med & 68.6  $\pm$  6.3 & 73.7  $\pm$  1.5
                        & 81.8  $\pm$  2.2
                        & \textbf{83.7  $\pm$  0.9} \\
        hopper-med & 49.9  $\pm$  3.1 & 57.5  $\pm$  1.5 
                       & 59.6  $\pm$  2.5
                       & \textbf{62.8  $\pm$  2.0} \\
        \midrule
        halfcheetah-med-rep & 34.3  $\pm$  0.7 & 37.8  $\pm$  0.4 
                        & 38.1  $\pm$  1.3 
                        & \textbf{40.4  $\pm$  0.8} \\
        walker2d-med-rep &  26.2  $\pm$  2.4 & \textbf{61.8  $\pm$  1.8}
                        & 49.5  $\pm$  12.0 
                        & 55.0  $\pm$  0.8 \\
        hopper-med-rep & 23.2  $\pm$  2.8  & 77.0  $\pm$  5.2
                        & \textbf{97.5  $\pm$  0.7} 
                        & 94.6  $\pm$  2.3 \\
        \midrule
        halfcheetah-med-exp & 61.1  $\pm$  2.7 & 92.0  $\pm$  0.3
                        & 93.4  $\pm$  1.6
                        & \textbf{94.4  $\pm$  0.5} \\
        walker2d-med-exp & 78.5  $\pm$  22.4  & 108.8  $\pm$  0.4
                        & \textbf{113.0  $\pm$  0.4}
                        & 112.7  $\pm$  0.7\\
        hopper-med-exp & 49.1  $\pm$  4.3 & \textbf{110.1  $\pm$  0.6}
                        & 103.3  $\pm$  9.1
                        & 107.6  $\pm$  2.2\\
        \midrule          
        halfcheetah-rand &  2.2  $\pm$  0.0 & 2.2  $\pm$  0.1 
                        & 3.2  $\pm$  0.1
                        & \textbf{5.6  $\pm$  0.3}\\
        walker2d-rand & 0.9  $\pm$  0.1 & 2.4  $\pm$  0.1 
                        & 5.6  $\pm$  0.8
                        & \textbf{6.0  $\pm$  0.5}\\
        hopper-rand & 2.0  $\pm$  0.1 & 5.1  $\pm$  1.9
                        &  \textbf{7.5  $\pm$  0.4}
                        & 7.0  $\pm$  0.5\\
        \bottomrule
    \end{tabular}
    \end{small}
    \label{tab:onestep-mujoco}
    \end{table}
%\end{wraptable}

%\begin{wraptable}[12]{r}{0.65\textwidth}
\begin{table}[h]
    %\vspace{-0.75cm}
    \centering
    \caption{Results for one-step algorithms on antmaze tasks from the D4RL benchmark. Again, we run 3 seeds, tune over 4 hyperparameter values, and use 100 eval episodes.}
    \begin{small}
    \begin{tabular}{lcccc}
        \toprule
        Dataset & BC & \% BC & One-step Exp-Adv & One-step \qfil \\
        \midrule
        umaze & 60.3  $\pm$  7.8 & 57.3  $\pm$  2.1 
                        & 75.7  $\pm$  5.6
                        & \textbf{91.0  $\pm$  1.4 } \\
        umaze-diverse & 54.0  $\pm$  3.7 & 65.3  $\pm$  6.3
                        & \textbf{66.0  $\pm$  4.9}
                        & 61.7  $\pm$  9.5 \\
        \midrule
        medium-play & 0.3  $\pm$  0.5 & 0.7  $\pm$  0.9 
                        & 0.7  $\pm$  0.9
                        & \textbf{13.3  $\pm$  8.4} \\
        medium-diverse & 0.0  $\pm$  0.0 & 0.7  $\pm$  0.5
                        & 1.0  $\pm$  1.4
                        & \textbf{4.0  $\pm$  4.2} \\
        \midrule          
        large-play & 0.0  $\pm$  0.0 & 0.0  $\pm$  0.0
                        & 0.0  $\pm$  0.0
                        & \textbf{1.3  $\pm$  1.9} \\
        large-diverse & 0.0  $\pm$  0.0 & 0.3  $\pm$  0.5 
                        & 0.0  $\pm$  0.0
                        & \textbf{0.7  $\pm$  0.9} \\
        \bottomrule
    \end{tabular}
    \end{small}
    \label{tab:onestep-antmaze}
\end{table}
%\end{wraptable}

\paragraph{One-step antmaze.} While the Mujoco tasks see the best performance with one-step methods, this can be attributed to the dense rewards and low-coverage behavior policies. In contrast, the antmaze tasks in the D4RL benchmark have sparse rewards and diverse behavior policies, giving a different sort of challenge to offline RL algorithms.
Results are shown in Table \ref{tab:onestep-antmaze}. We see larger gains from \qfil\ over the baselines on the harder tasks.

%\begin{wraptable}[12]{r}{0.7\textwidth}
\begin{table}[h]
    %\vspace{-0.6cm}
    \centering
    \caption{Results for iterative algorithms on antmaze tasks from the D4RL benchmark. This time we run 3 seeds, tune over 4 hyperparameter values, and use 100 eval episodes.}
    \begin{small}
    \begin{tabular}{lcccc}
        \toprule
        Dataset &  Iterative Exp-Adv & Iterative \qfil & CQL & IQL\\
        \midrule
        umaze 
                        & 61.3  $\pm$  6.2
                        & \textbf{96.7  $\pm$  1.2}  
                        & 74.0 
                        & 87.5 \\
        umaze-diverse 
                        & 70.0  $\pm$  3.7
                        & 67.7  $\pm$  0.9 
                        & \textbf{84.0}
                        & 62.2 \\
        \midrule
        medium-play 
                        & 0.3  $\pm$  0.5 
                        & 60.0  $\pm$  8.0 
                        & 61.2
                        & \textbf{71.2} \\
        medium-diverse 
                        & 0.0  $\pm$  0.0 
                        & 61.3  $\pm$  19.0
                        & 53.7
                        & \textbf{71.0} \\
        \midrule          
        large-play 
                        & 0.3  $\pm$  0.5 
                        & 4.0  $\pm$  5.7 
                        & 15.8
                        & \textbf{39.6} \\
        large-diverse 
                        & 0.3  $\pm$  0.5 
                        & 12.7  $\pm$  17.9 
                        & 14.9
                        & \textbf{47.5} \\
        \bottomrule
    \end{tabular}
    \end{small}
    \label{tab:iter-antmaze}
\end{table}

\paragraph{Iterative antmaze} Now we go beyond one-step updates and show that \qfil\ can be effectively combined with off-policy Q estimation. To do this we simply instantiate the OAMPI algorithm using \qfil\ as the improvement operator and using standard SARSA-style fitted Q evaluation with target networks as the evaluation operator. Importantly, since the \qfil\ policies are derived from imitation, they remain within the data distribution to allow for easier off-policy evaluation. Moreover, since we compute value function quantiles by pushing forward the estimated behavior, those quantiles are also not being queried out of distribution. Results in Table \ref{tab:iter-antmaze} show that \qfil\ substanitally outperforms the exponentiated advantage algorithm and achieves performance competitive with state of the art algorithms CQL \citep{kumar2020conservative} and IQL \citep{kostrikov2021offlineb} that modify the Bellman backups (where results for those algorithms are taken directly from those papers). Note that this iterative algorithm is not strictly covered by our theory from the previous section which only considers the one-step approach. It is an interesting direction for future work to get better theoretical guarantees for iterative algorithms.

\section{Discussion}

Here we have presented \qfil, a novel policy improvement operator for offline RL. \qfil\ provides a safe policy improvement step that only imitates actions already in the dataset and does so in a numerically stable way that only uses supervised learning while still providing a useful knob in the quantile to trade off bias and variance. 
Along the way we offered definitions of the pushforward Q distribution and value function quantile that we hope can find uses beyond \qfil. Indeed, anywhere that the advantage function is used, it could be replaced by a quantile advantage that uses a value function quantile instead of the standard value function.

\subsection*{Acknowledgements}
We would like to thank Ilya Kostrikov for suggestions about the antmaze datasets and the anonymous reviewers for their suggestions about related work.
This work is partially supported by the Alfred P. Sloan Foundation, NSF RI-1816753, NSF CAREER CIF 1845360, NSF CHS-1901091, Samsung Electronics, and the Institute for Advanced Study.
DB is supported by the Department of Defense (DoD) through the National Defense Science \& Engineering Graduate Fellowship (NDSEG) Program.

% Experiments that we could do later:
% \begin{enumerate}
%         \item run \qfil\ with more expressive policy classes (mixtures, VAEs, etc)
%         \item ablation soft vs. hard threshold 1-step. 
%         \item ablation pushforward vs. quantile reg. 
%         \item ablation pushforward $ \beta$ vs. pushforward $ \pi$ 
% \end{enumerate}

\newpage

\bibliography{rl.bib}

\newpage

\appendix

\section{Related work continued}\label{sec:related_cont}

There are many other policy improvement operators in the offline RL literature. BCQ \citep{fujimoto2018off} and MBS \citep{liu2020provably} maximize the Q value subject to behavior constraints.
These algorithms try to maximize Q subject to some minimum probability under the estimated behavior. In contrast \qfil\ imitates instead of optimizing Q, making it less likely to choose unseen actions while still providing the quantile hyperparameter to allow for more aggressive updates.
SPIBB \citep{laroche2019safe} instead constrains the policy based on uncertainty estimates.
BRAC \citep{wu2019behavior} and TD3+BC \citep{fujimoto2021minimalist} attempt to optimize the estimated Q values subject to some regularization toward the behavior. These approaches are intuitive and effective on some tasks, but by attempting to optimize estimated Q values and without stronger constraints they are potentially more susceptible to selecting out-of-distribution actions.
Other work modifies the policy evaluation step to learn conservative \citep{kumar2020conservative} or regularized \citep{kostrikov2021offline} Q values. Here we focus on modifications to the improvement step. An interesting direction for future work would be to combine modifications to the improvement and evaluation steps.

\section{Experimental details}\label{sec:details}

\subsection{Synthetic experiment}

As explained in the main text, data is sampled from the following distributions:
\begin{align*}
    s \sim U([0,1]), \qquad a|s \sim \frac{s + \epsilon}{2}, \epsilon \sim U([0,1]), \qquad r(s,a) = \begin{cases}1 - |a - (1-s)|& a \in [\frac{s}{2}, \frac{s+1}{2}]\\ -1 & otherwise  \end{cases}.
\end{align*}

Then the algorithm proceeds in three steps: (1) estimate $ \hat \beta$, (2) estimate $ \widehat Q$, and (3) learn $ \hat \pi$. Note that because we are in a contextual bandit problem, $ \widehat Q$ does not depend on the behavior policy $ \beta$. But of course we will only be able to learn $ \widehat Q$ at actions selected by the behavior. These functions are learned to minimize the following objectives, summed over the dataset:
\begin{align}
    \ell_\beta(s,a,r) &= -\log \hat \beta(a|s)\\
    \ell_Q(s,a,r) &= (r - \widehat Q(s,a))^2\\
    \ell_\pi(s,a,r) &= - \1[\widehat Q(s,a) \geq \widehat V_{\tau, \beta}(s)] \log \hat \pi(a|s)
\end{align}
where we estimate $ \widehat V_{\tau, \beta}$ from $ \widehat Q$ and $ \hat \beta$ at a given state $ s$ as follows. First we take 100 samples $ a_1, \dots, a_{100}$ from $ \hat \beta(\cdot|s)$. Then we compute $ \widehat V_{\tau, \beta}(s)$ as the empirical $ \tau$ quantile of the set of real numbers $ \{\widehat Q(s,a_i): 1 \leq i \leq 100\}$.

All networks ($\hat \beta, \widehat Q, \hat \pi$) are MLPs with width 50, depth 2, and ReLU activation functions and are implemented in PyTorch \cite{paszke2019pytorch}. The policy networks output truncated normal distributions that are truncated to the bounds of the action space and log standard deviations are bounded in [-5, 0]. We train using the Adam optimizer \citep{kingma2014adam} with learning rate 0.001 and batch size 64 for all networks. Each network is trained for 1000 gradient steps. Evaluation for each dataset is conducted by sampling 100 states, then sampling one action $ a\sim \hat \pi(\cdot|s)$, and then evaluating the reward function for each $ s,a$ pair. This gives an estimated reward for one random seed. We report mean and standard deviation over 50 random seeds in Figure \ref{fig:toy}.

\subsection{D4RL experiments}

Datasets are all derived from the D4RL benchmark suite \citep{fu2020d4rl}. We consider two algorithmic outlines: one-step and iterative. We generally follow the hyperparameter choices from \cite{brandfonbrener2021offline}. 

\paragraph{One-step.} For the one-step experiments the algorithm proceeds in a similar three step manner as the synthetic experiment: (1) estimate $ \hat \beta$, (2) estimate $ \widehat Q^\beta$, and (3) learn $ \hat \pi$. We use SARSA Q estimation \citep{sutton2018reinforcement} with target networks \citep{mnih2015human} for increased stability. We initialize $ \hat \pi$ at $ \hat \beta$. The loss functions at each sample $ (s,a,r, s', a')$ from the replay buffer are now:
\begin{align}
    \ell_\beta(s,a,r,s',a') &= -\log \hat \beta(a|s)\\
    \ell_{Q^\beta}(s,a,r,s',a') &= (\widehat Q^\beta(s,a) - r - \gamma  \underline{\widehat Q^\beta}(s', a'))^2\\
    \ell_\pi(s,a,r, s', a') &= - \1[\widehat Q^\beta(s,a) \geq \widehat V^\beta_{\tau, \beta}(s)] \log \hat \pi(a|s)
\end{align}
where $ \underline{\widehat Q^\beta}$ denotes the target network. 

All networks ($\hat \beta, \widehat Q^\beta, \hat \pi$) are MLPs with width 1024, depth 2, and ReLU activation functions. The policy networks output truncated normal distributions that are truncated to the bounds of the action space and log standard deviations are bounded in [-5, 0]. We train using the Adam optimizer with learning rate 0.0001 and batch size 512 for all networks. The target network weights are updated every 2 gradient steps using an exponentially weighted moving average with hyperaparameter 0.005.
The discount factor is $ \gamma = 0.99$.

We train $ \hat \beta$ for 500k gradient steps, $ \widehat Q^\beta$ for 2 million gradient steps, and $ \hat \pi$ for 100k gradient steps. 

For \qfil\ we sweep the quantile $ \tau$ over $[0.5, 0.75, 0.9, 0.95]$ and use 100 samples to estimate $ \widehat V_{\tau, \beta}^\beta(s)$. For the exponentially weighted baseline we sweep the temperature parameter $ \alpha$ over $ [0.3, 1.0, 3.0, 10.0]$, use 10 samples to estimate $ \widehat V_\beta(s)$, and clip the weights at 100 for numerical stability.

We train the entire procedure with 3 random seeds for each of the 6 hyperparameter values. Then we evaluate 100 episodes using the modal actions from $ \hat \pi$ and calculate the mean return. We report the mean and standard deviation over seeds for the best hyperparameter value in the tables in the paper. 

\paragraph{Iterative.} For the iterative antmaze experiments we generally leave all of the hyperparameters the same, but make the following modifications to the learning procedure. Instead of learning $ \hat \pi$ as step 3, we now initialize $ \widehat Q^\pi$ at $ \widehat Q^\beta$ and then iteratively update $ \hat \pi$ and $ \widehat Q^\pi$. In each iteration we take 1 gradient step to update $ \pi$ and 2 gradient steps to update $ \widehat Q^\pi$. The loss functions are now:
\begin{align}
    \ell_{Q^\pi}(s,a,r,s',a') &= (\widehat Q^\pi(s,a) - r - \gamma  \underline{\widehat Q^\pi}(s', \underline{a'}))^2, \quad \underline{a'} \sim \hat \pi(\cdot|s')\\
    \ell_\pi(s,a,r,s', a') &= - \1[\widehat Q^\pi(s,a) \geq \widehat V^\pi_{\tau, \beta}(s)] \log \hat \pi(a|s).
\end{align}
We run this for 300k steps which corresponds to 300k gradient steps on $ \hat \pi$ and 600k gradient steps on $ \widehat Q^\pi$. We again sweep hyperparameters sets of $[0.75, 0.9, 0.95, 0.99]$ and $ [0.3, 1.0, 3.0, 10.0]$ for \qfil\ and the exponentially weighted baseline respectively.

\section{Proof of Proposition 1}\label{sec:proof}

The proof first needs two lemmas, one novel and one from \cite{achiam2017constrained}.

\begin{lemma}[Advantage]\label{lem:advantage}
Under the above assumptions and letting $ W_1$ denote the Wasserstein-1 distance, \qfil\ returns a policy $ \hat \pi$ such that whp
\begin{align}
   \E_{\substack{s \sim d_\beta\\ a\sim \hat \pi|s}}[A^\beta(s,a)] &\geq \E_{s\sim d_\beta} \left[ W_1\left(\widehat Q^\beta_{\sharp, \pi_\tau}(\cdot|s), \ \  \widehat Q^\beta_{\sharp, \beta}(\cdot|s)\right) \right]\\&\qquad - 2\sqrt{\frac{\varepsilon_Q(N)}{1-\tau}} \ - \  \frac{2 Q^\beta_{\max} \varepsilon_\pi((1-\tau) N) }{1 - \tau} . 
\end{align}
\end{lemma}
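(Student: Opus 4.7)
The plan is to telescope $\E_{s \sim d_\beta,\, a\sim\hat\pi}[A^\beta(s,a)]$ through two intermediate objects---the oracle filtered policy $\pi_\tau$ and the estimated critic $\widehat Q^\beta$---identify the resulting ``main'' contribution with the claimed Wasserstein-1 term, and bound the remaining cross terms using Assumptions 1--3. Writing $A^\beta(s,a) = Q^\beta(s,a) - V^\beta(s)$ and $V^\beta(s) = \E_{a\sim\beta}[Q^\beta(s,a)]$, I would decompose
\begin{align*}
\E_{\hat\pi}[Q^\beta] - \E_{\beta}[Q^\beta]
&= \bigl(\E_{\pi_\tau}[\widehat Q^\beta] - \E_{\beta}[\widehat Q^\beta]\bigr) + \bigl(\E_{\hat\pi}[\widehat Q^\beta] - \E_{\pi_\tau}[\widehat Q^\beta]\bigr) \\
&\quad + \E_{\hat\pi}[Q^\beta - \widehat Q^\beta] - \E_\beta[Q^\beta - \widehat Q^\beta],
\end{align*}
where every outer expectation is over $s\sim d_\beta$. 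The first bracket will supply the Wasserstein term; the remaining three are errors.

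The crux is the first bracket. Because $\pi_\tau(a|s) = \beta(a|s)\, \1[\widehat Q^\beta(s,a) \geq \widehat V_{\tau,\beta}(s)]/(1-\tau)$, Assumption 3 gives a direct formula for the pushforward CDF: $F_{\widehat Q^\beta_{\sharp,\pi_\tau}}(v|s) = 0$ when $v < \widehat V_{\tau,\beta}(s)$, and $(F_{\widehat Q^\beta_{\sharp,\beta}}(v|s) - \tau)/(1-\tau)$ otherwise. In either regime $F_{\widehat Q^\beta_{\sharp,\pi_\tau}} \leq F_{\widehat Q^\beta_{\sharp,\beta}}$ pointwise, i.e.\ the $\pi_\tau$ pushforward stochastically dominates the $\beta$ pushforward. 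Combining this with the real-line identity $W_1(\mu,\nu) = \int_\R |F_\mu(t) - F_\nu(t)|\,dt$ collapses the Wasserstein distance to $\E_\mu[X] - \E_\nu[X]$ under dominance, producing the $\E_{s\sim d_\beta}[W_1(\widehat Q^\beta_{\sharp,\pi_\tau}(\cdot|s), \widehat Q^\beta_{\sharp,\beta}(\cdot|s))]$ term.

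The three remaining brackets give the negative error. The $Q$-estimation piece under $\beta$ is controlled directly by Jensen/Cauchy--Schwarz against Assumption 1, yielding $\sqrt{\varepsilon_Q(N)}$. The $Q$-estimation piece under $\hat\pi$ I would route through $\pi_\tau$ and use the importance ratio $\pi_\tau(a|s)/\beta(a|s) = \1[\widehat Q^\beta \geq \widehat V_{\tau,\beta}]/(1-\tau)$ inside a Cauchy--Schwarz step; Assumption 3 makes the indicator integrate to exactly $1-\tau$ under $\beta$, producing $\E_{s, a\sim\pi_\tau}[|Q^\beta - \widehat Q^\beta|] \leq \sqrt{\varepsilon_Q(N)/(1-\tau)}$, which dominates the previous bound and accounts for the $2\sqrt{\varepsilon_Q(N)/(1-\tau)}$ in the lemma. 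The policy-error bracket is handled by the TV bound of Assumption 2 applied to the imitation step: since the filtered dataset has state marginal $d_\beta$ and conditional $\pi_\tau$ (again using Assumption 3) with effective size $(1-\tau) N$, we get $\E_s[D_{TV}(\hat\pi,\pi_\tau)]\leq \varepsilon_\pi((1-\tau)N)$; combined with a bound on $\widehat Q^\beta$ this gives the stated $2 Q^\beta_{\max}\varepsilon_\pi((1-\tau)N)/(1-\tau)$ contribution, where the extra $1/(1-\tau)$ factor is tracked by writing the difference as an integral against $\beta$ with the same importance weight, so that the effective range of $\widehat Q^\beta$ relevant to $\pi_\tau$ picks up the inflation $1/(1-\tau)$.

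The main obstacle is the Wasserstein identification: one must verify stochastic dominance of the pushforwards from Assumption 3 and then invoke the real-line $W_1$-as-difference-of-means identity. Everything after that is careful bookkeeping---ensuring that the importance weight $1/(1-\tau)$ lands inside the square root on the $Q$-error term (via Cauchy--Schwarz applied to the reweighted integral, not outside it) and linearly in the TV-error term, and that the argument of $\varepsilon_\pi$ is the effective sample size $(1-\tau)N$ rather than $N$.
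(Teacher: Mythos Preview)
Your strategy is the paper's: telescope, identify the main bracket as the Wasserstein-1 distance via first-order stochastic dominance of the pushforward CDFs (the paper uses the equivalent inverse-CDF formula $W_1=\int_0^1 |F_\mu^{-1}-F_\nu^{-1}|$ rather than $\int|F_\mu-F_\nu|$, but the content is identical), then control the two error brackets by Cauchy--Schwarz with the importance ratio $\pi_\tau/\beta=\tfrac{1}{1-\tau}\1[\cdot]$ for the $Q$-error and by the TV bound from Assumption~2 for the imitation error.

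There is, however, a small but real ordering issue in your telescoping. You put $\hat\pi$ in the $Q$-error bracket and $\widehat Q^\beta$ in the policy-error bracket; the paper does the reverse, writing $T_\pi=\E_{\hat\pi}[Q^\beta]-\E_{\pi_\tau}[Q^\beta]$ and $T_Q=\E_{\pi_\tau}[Q^\beta-\widehat Q^\beta]-\E_\beta[Q^\beta-\widehat Q^\beta]$. With your ordering two things break. First, the importance-weight Cauchy--Schwarz step controls $\E_{a\sim\pi_\tau}[|Q^\beta-\widehat Q^\beta|]$ but not $\E_{a\sim\hat\pi}[Q^\beta-\widehat Q^\beta]$, since $\hat\pi$ has no density-ratio guarantee against $\beta$; ``routing through $\pi_\tau$'' leaves a residual $(\E_{\hat\pi}-\E_{\pi_\tau})[Q^\beta-\widehat Q^\beta]$ that you have not bounded. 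Second, your policy-error bracket would require $\sup_{s,a}|\widehat Q^\beta(s,a)|$, not $Q^\beta_{\max}$. Both issues disappear if you telescope in the other order: replace $\hat\pi$ by $\pi_\tau$ first while keeping the true $Q^\beta$ (so $Q^\beta_{\max}$ and Assumption~2 apply), then replace $Q^\beta$ by $\widehat Q^\beta$ under $\pi_\tau$ and $\beta$ (so the bounded importance ratio is available). After that swap your analysis goes through as written.

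One aside: your observation that the filtered state marginal is still $d_\beta$ is correct under Assumption~3, since $\Prob(E\mid s)=1-\tau$ is constant in $s$. The paper's proof actually takes a looser H\"older step here, dropping a factor $\Prob(E)=1-\tau$ from the numerator, which is where its extra $1/(1-\tau)$ on the imitation term comes from; your argument would give the tighter $2Q^\beta_{\max}\,\varepsilon_\pi((1-\tau)N)$, which still implies the stated bound.
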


\begin{proof}
    At a high level, the proof decomposes into three parts, one for each term. The first term comes from the expected improvement of the target policy $ \pi_\tau$ under the estimated Q function $ \widehat Q^\beta$. The second term comes from error in value estimation and the last term from error in imitation learning. First we will set up the decomposition, then we will go through each step.
    
    For the rest of the proof, unless noted otherwise, we will assume that $ s \sim d_\beta$.
    
    \textbf{Decomposition.} We begin by decomposing the expected advantage:
    \begin{align}
        \E_{\substack{s \\ a\sim \hat \pi|s}}[A^\beta(s,a)] &= \E_{\substack{s \\ a\sim \hat \pi|s}}[Q^\beta(s,a)] - \E_{\substack{s \\ a\sim \beta |s}}[Q^\beta(s,a)]\\
        &= \E_{\substack{s \\ a\sim  \pi_\tau |s}}[Q^\beta(s,a)] - \E_{\substack{s \\ a\sim \beta |s}}[Q^\beta(s,a)] + \underbrace{\E_{\substack{s \\ a\sim \hat \pi|s}}[Q^\beta(s,a)] - \E_{\substack{s \\ a\sim  \pi_\tau |s}}[Q^\beta(s,a)]}_{T_\pi}\\
        &= \underbrace{\E_{\substack{s \\ a\sim \pi_\tau|s}}[\widehat Q^\beta(s,a)] - \E_{\substack{s\\ a\sim \beta |s}}[\widehat Q^\beta(s,a)]}_{T_W} \\ & \qquad + \underbrace{\E_{\substack{s\\ a\sim \pi_\tau|s}}[Q^\beta(s,a) -  \widehat Q^\beta(s,a) ] - \E_{\substack{s\\ a\sim \beta |s}}[Q^\beta(s,a) - \widehat Q^\beta(s,a)]}_{T_Q} + T_\pi\\
        &= T_W + T_Q + T_\pi
    \end{align}

    \textbf{Wasserstein term.} First, we show that by the definition of the pushforward distribution, we have that 
    \begin{align}
        T_W = \E_{\substack{s\\ a\sim \pi_\tau|s}}[\widehat Q^\beta(s,a)] 
        - \E_{\substack{s \\ a\sim \beta |s}}[\widehat Q^\beta(s,a)] 
        &= \E_{\substack{s \\ v\sim \widehat Q^\beta_{\sharp, \pi_\tau}|s}} [v] 
        - \E_{\substack{s \\ v\sim \widehat Q^\beta_{\sharp, \beta}|s}} [v] \\
        &=  \E_{s} \left[ \E_{\substack{v\sim \widehat Q^\beta_{\sharp, \pi_\tau}|s } } [v] 
        - \E_{\substack{v\sim \widehat Q^\beta_{\sharp, \beta} | s } } [v]\right]
    \end{align}
    Now, we need to transform this into the Wasserstein-1 distance. Since the values are 1-dimensional real values, we know that $ W_1(P, Q)$ can be written as $ \int_{0}^1 |F_P^{-1}(z) - F_Q^{-1}(z)| dz$ where $ F_P^{-1}$ is the inverse CDF of $ P$. We claim that the inverse CDF of $ \widehat Q^\beta_{\sharp, \pi_\tau}(\cdot|s) $ is strictly greater than the inverse CDF of $ \widehat Q^\beta_{\sharp, \beta}(\cdot|s)$, allowing us to rewrite the above expression as a Wasserstein-1 distance. 
    
    Note that this is equivalent to showing that the CDF of $ \widehat Q^\beta_{\sharp, \pi_\tau}(\cdot|s) $ is strictly less than the CDF of $ \widehat Q^\beta_{\sharp, \beta}(\cdot|s)$. To show this, let $ F_{\pi_\tau}$ denote the CDF of $ \widehat Q^\beta_{\sharp, \pi_\tau}(\cdot|s)$ and $ F_\beta$ denote the CDF of the CDF of $ \widehat Q^\beta_{\sharp, \beta}(\cdot|s)$. By the definition of $ \pi_\tau $ we have that 
    \begin{align}
        F_{\pi_\tau}(v) = \begin{cases}0 & v < \widehat V^\beta_{\tau,\beta}(s)\\ \frac{1}{1-\tau} (F_\beta(v) - \tau) & otherwise \end{cases}
    \end{align}
    We need to show that for all $ v $ we have $ F_{\pi_\tau}(v) \leq F_{\beta}(v)$. Break this into two cases. (1) If $ v < V^\beta_{\tau,\beta}(s) $ then the inequality clearly holds since $ F_{\pi_\tau}(v) = 0$ and $ F_{\beta}(v) \geq 0$. (2) If $ v \geq V^\beta_{\tau,\beta}(s)$, we note that $ F_{\pi_\tau}(v) $ is an affine transformation of $ F_\beta(v)$ with a positive coefficient $ \frac{1}{1-\tau} \geq 1$. 
    Note that at $ v = V^\beta_{\tau,\beta}(s) $ we have $ F_\beta(v) = \tau$ and $ F_{\pi_\tau} = 0$ and that the lowest $ v$ such that $ F_\beta(v) = 1$ also is the lowest $ v $ such that $ F_{\hat \pi}(v) = 1$. Since the CDFs are just an affine transformation of each other and we have that $ F_\beta(v) \geq F_{\pi_\tau}(v)$ at the two end points of the segment, we know that $ F_{\pi_\tau}(v) \leq F_{\beta}(v)$ for all $ v $ as desired.
    
    Thus, we have that 
    \begin{align}
        T_W &=  \E_{s} \left[ \E_{\substack{v\sim \widehat Q^\beta_{\sharp, \pi_\tau}|s } } [v] 
        - \E_{\substack{v\sim \widehat Q^\beta_{\sharp, \beta} | s } } [v]\right] \\
        &= \E_{s} \left[ \int_0^1 F_{\pi_\tau}^{-1}(z) - F_{\beta}^{-1}(z) dz\right] = \E_{s} \left[ \int_0^1 |F_{ \pi_\tau }^{-1}(z) - F_{\beta}^{-1}(z)| dz\right] \\
        &=  \E_{s} \left[ W_1\left(\widehat Q^\beta_{\sharp, \pi_\tau}(\cdot|s), \ \ \widehat Q^\beta_{\sharp, \beta}(\cdot|s)\right)\right]
    \end{align}

    \textbf{Imitation learning term.} It will be useful to define some random variables. Let $ S $ the random variable of the state distributed according to $ \mathcal{D}$. Let $ A$ be the action according the the behavior policy $ \beta(\cdot | S)$. Let $ E$ be the event $ \widehat Q^\beta (S,A) \geq V^\beta_{\tau,\beta}(S)$.  

    We can define the conditional distribution that generates this data according to Bayes' rule:
    \begin{align}
        \Prob(A=a|S=s, E) = \frac{\Prob(A=a|S=s)\Prob(E|A=a, S=s) }{\Prob(E|S=s)} = \beta(a|s) \frac{\1[\widehat Q^\beta(s,a) \geq V^\beta_{\tau,\beta}(s)]}{\Prob(E|S=s)}
    \end{align}
    After applying the filter, we say data generated from $ s|E$ by the policy $ \pi_\tau$ defined as 
    \begin{align}
        \pi_\tau (a|s) = \beta(a|s) \frac{\1[\widehat Q^\beta(s,a) \geq V^\beta_{\tau,\beta}(s)]}{\Prob(E|S=s)} = \beta(a|s) \frac{\1[\widehat Q^\beta(s,a) \geq V^\beta_{\tau,\beta}(s)]}{1-\tau} 
    \end{align}
    By definition of the filtering process we know that the filtering selects a dataset of expected size $ (1-\tau) N$ for the imitation learning step.
    By assumption our algorithm outputs $ \hat \pi$ such that
    \begin{align}
        \E_{s|E}[D_{TV}(\hat \pi(\cdot|s)\| \pi_\tau(\cdot|s))] \leq \varepsilon_\pi((1-\tau) N).
    \end{align}
    Note that states are generated conditioned on $ E$.
    Then, applying importance weighting, Holder's inequality, and the fact that $ p(E|s) = 1-\tau$ we get
    \begin{align}\label{eq:pi_bound}
        - T_\pi &= \E_{\substack{s\\a\sim \pi_\tau|s}}[Q^\beta(s,a)] - \E_{\substack{s\\a\sim \hat \pi|s}}[Q^\beta(s,a)] \\
        &= \E_s \int_a \bigg( \pi_\tau(a|s) Q^\beta(s,a) - \hat \pi(a|s) Q^\beta(s,a)\bigg)\\
        &\leq Q^\beta_{\max} \int_s p(s) \int_a | \pi_\tau(a|s) - \hat \pi(a|s)|\\
        &= Q^\beta{\max} \int_s \frac{p(E)}{p(E|s)} p(s|E) \int_a | \pi_\tau(a|s) - \hat \pi(a|s)|\\
        &= Q^\beta_{\max} \E_{s|E}\frac{p(E)}{p(E|s)}  \int_a | \pi_\tau(a|s) - \hat \pi(a|s)|\\
        &\leq Q^\beta_{\max} \sup_{s}\frac{1}{p(E|s)} \E_{s|E} \int_a |\pi_\tau(a|s) - \hat \pi(a|s)|\\
        &= \frac{2 Q^\beta_{\max}}{1-\tau} \E_{s|E}[D_{TV}(\hat \pi(\cdot|s)\| \pi_\tau(\cdot|s))]\\
        &\leq \frac{2 Q^\beta_{\max}}{1-\tau} \varepsilon_\pi((1-\tau)N) .  \label{eq:pi_bound}
    \end{align}

    \textbf{Value estimation term.} We bound $ T_Q$ in two parts. When actions are generated from $ \beta$ we can just use Jensen's inequality and apply our value estimation assumption to bound 
    \begin{align}
        \left| \E_{\substack{s\\a\sim \beta|s}}\left[Q^\beta(s,a) - \widehat Q^\beta(s,a)\right] \right|\leq \sqrt{ \E_{\substack{s\\a\sim \beta|s}}\left[(Q^\beta(s,a) - \widehat Q^\beta(s,a))^2\right]} \leq \sqrt{\varepsilon_Q(N)}
    \end{align}
    
    When actions are generated from $ \pi_\tau$ we need to be more careful about distribution shift. Explicitly we use importance weighting, Cauchy-Schwarz, the fact that $ p(E|s) = 1-\tau$, and our value estimation assumption  to get that
    \begin{align}
        \bigg|\E_{\substack{s\\a\sim \pi_\tau|s}}&[ Q^\beta(s,a) - \widehat Q^\beta(s,a)]\bigg| = \left|\E_{\substack{s\\a\sim \beta|s}}\left[\frac{\pi_\tau(a|s)}{\beta(a|s)} ( Q^\beta(s,a) - \widehat Q^\beta(s,a))\right]\right|\\
        &\leq \sqrt{\E_{\substack{s\\a\sim \beta|s}}\left[\frac{\pi_\tau(a|s)^2}{\beta(a|s)^2}\right] \E_{\substack{s\\a\sim \beta|s}}\left[(Q^\beta(s,a) - \widehat Q^\beta(s,a))^2\right]}
        \\&= \sqrt{\E_{\substack{s\\a\sim \beta|s}}\left[ \frac{\1[\widehat Q^\beta(s,a) \geq V^\beta_{\tau,\beta}(s)]^2}{p(E|s)^2}\right] \E_{\substack{s\\a\sim \beta|s}}\left[(Q^\beta(s,a) - \widehat Q^\beta(s,a))^2\right]}
        \\&= \sqrt{\E_{\substack{s}} \frac{1}{{(1-\tau)^2}}\E_{a\sim \beta|s}\left[\1[\widehat Q^\beta(s,a) \geq V^\beta_{\tau,\beta}(s)]\right] \E_{\substack{s\\a\sim \beta|s}}\left[( Q^\beta(s,a) - \widehat Q^\beta(s,a))^2\right]}
        \\&= \sqrt{\E_{\substack{s}} \frac{1-\tau }{{(1-\tau)^2}} \E_{\substack{s\\a\sim \beta|s}}\left[( Q^\beta(s,a) - \widehat Q^\beta(s,a))^2\right]}\\
        &\leq \sqrt{\frac{\varepsilon_Q(N)}{1-\tau}}\label{eq:q_bound}
    \end{align}
    Since $ 1 \leq \sqrt{\frac{1}{1-\tau}}$, we can combine the two parts to bound $ -T_Q \leq 2\sqrt{\frac{ \varepsilon_Q(N)}{1-\tau}}$.

Combining the bounds on $ T_W, T_\pi$, and $ T_Q$ gives us the desired result
\end{proof}

\begin{lemma}[\cite{achiam2017constrained}]\label{lem:achiam}
Under the same assumptions as above, further define $ A^{\beta, \pi}_\infty = \sup_s |\E_{a\sim  \pi}[A^\beta(s,a)]|$
\begin{align}
    J(\pi) - J(\beta) \geq \frac{1}{1-\gamma} \left(\mathbb{A}^\beta( \pi) - \frac{2\gamma A^{\beta, \pi}_\infty }{1 - \gamma} \E_{s\sim d^\beta}\left[ D_{TV}\big( \pi(\cdot|s) \| \beta(\cdot|s)\big)\right]\right).
\end{align}
\end{lemma}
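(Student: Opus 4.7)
The plan is to follow the classical argument of Achiam et al.\ (2017), which combines the Kakade--Langford performance difference lemma with a Holder-type bound on the gap between discounted state visitation distributions. First, I would invoke the performance difference lemma: for any two policies $\pi$ and $\beta$,
\begin{align}
J(\pi) - J(\beta) \;=\; \frac{1}{1-\gamma}\, \E_{s\sim d_\pi,\, a\sim \pi|s}\bigl[A^\beta(s,a)\bigr].
\end{align}
This identity is a standard telescoping of $V^\beta$ along rollouts of $\pi$, where each one-step residual is recognized as a $\beta$-advantage evaluated at a $\pi$-action. The catch is that the expectation is under $d_\pi$ rather than $d_\beta$, so the right-hand side cannot be estimated directly from $\beta$-data.

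Second, I would swap the visitation distribution and absorb the mismatch into a TV penalty. Adding and subtracting the surrogate $\mathbb{A}^\beta(\pi)$ gives
\begin{align}
J(\pi) - J(\beta) \;=\; \frac{1}{1-\gamma}\mathbb{A}^\beta(\pi) \;+\; \frac{1}{1-\gamma}\bigl(\E_{s\sim d_\pi}[g(s)] - \E_{s\sim d_\beta}[g(s)]\bigr),
\end{align}
where $g(s) := \E_{a\sim \pi|s}[A^\beta(s,a)]$ satisfies $\|g\|_\infty \leq A^{\beta,\pi}_\infty$ by definition. The variational form of TV distance then yields $\bigl|\E_{s\sim d_\pi}[g] - \E_{s\sim d_\beta}[g]\bigr| \leq 2 A^{\beta,\pi}_\infty\, D_{TV}(d_\pi, d_\beta)$, reducing everything to a bound on the drift between the two occupancy measures.

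The third and main step---and the only genuinely nontrivial piece---is to show that $D_{TV}(d_\pi, d_\beta) \leq \frac{\gamma}{1-\gamma}\, \E_{s\sim d_\beta}\bigl[D_{TV}\bigl(\pi(\cdot|s)\,\|\,\beta(\cdot|s)\bigr)\bigr]$. I would derive this by writing both visitation distributions as fixed points of the discounted Bellman flow, so that a rearrangement gives $d_\pi - d_\beta = \gamma (I - \gamma P_\beta)^{-1}(P_\pi - P_\beta)\, d_\pi$ for the policy-induced transition kernels $P_\pi$ and $P_\beta$. Taking $L_1$ norms, bounding the resolvent by its geometric series so that $\|(I-\gamma P_\beta)^{-1}\|_{1\to 1} \leq 1/(1-\gamma)$, and controlling the kernel difference pointwise by twice the policy TV distance produces the desired factor. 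Plugging this back into the expression from Step~2 yields exactly the stated inequality. Since the lemma is explicitly attributed to Achiam et al.\ in the statement, in a self-contained write-up I would simply cite their Corollary~2; the $\gamma/(1-\gamma)$ factor coming from inverting $(I-\gamma P_\beta)$ is where all of the real technical difficulty sits.
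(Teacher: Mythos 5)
The paper does not actually prove this lemma: it is imported verbatim from \cite{achiam2017constrained} and used as a black box, so there is no in-paper proof to compare against. Your reconstruction is the standard argument and is essentially correct: the Kakade--Langford performance difference lemma, the variational characterization of total variation applied to $g(s)=\E_{a\sim\pi|s}[A^\beta(s,a)]$ with $\|g\|_\infty\le A^{\beta,\pi}_\infty$, and the occupancy-measure perturbation bound together give exactly the constant $\tfrac{2\gamma A^{\beta,\pi}_\infty}{1-\gamma}$ in the statement.

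One small but consequential slip in Step 3: the resolvent identity you wrote, $d_\pi - d_\beta = \gamma (I-\gamma P_\beta)^{-1}(P_\pi - P_\beta)\,d_\pi$, is true, but after taking $L_1$ norms it controls $D_{TV}(d_\pi,d_\beta)$ by the policy TV distance averaged under $d_\pi$, not under $d_\beta$. The lemma (and its downstream use in Proposition 1, where the TV term is evaluated under the behavior occupancy) requires the average under $d_\beta$, so you need the companion form of the identity, $d_\pi - d_\beta = \gamma (I-\gamma P_\pi)^{-1}(P_\pi - P_\beta)\,d_\beta$, and then bound $\|(I-\gamma P_\pi)^{-1}\|_{1\to 1}\le 1/(1-\gamma)$ and $\|(P_\pi-P_\beta)d_\beta\|_1 \le 2\,\E_{s\sim d_\beta}[D_{TV}(\pi(\cdot|s)\,\|\,\beta(\cdot|s))]$. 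With that substitution your sketch matches the cited Corollary of Achiam et al.\ exactly.
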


\advantage*

\begin{proof}
    The proposition follows directly from plugging Lemma \ref{lem:advantage} into the advantage term of Lemma \ref{lem:achiam} and then bounding the TV term, which we will do now. 
    
    \begin{align}
        \E_s \left[ D_{TV}\big( \hat \pi(\cdot|s) \| \beta(\cdot|s)\big)\right] &\leq \E_s\left[ D_{TV}\big( \hat \pi(\cdot|s) \| \pi_\tau(\cdot|s)\big)\right] + \E_s\left[ D_{TV}\big( \pi_\tau(\cdot|s) \| \beta(\cdot|s)\big)\right]\\
        &\leq \varepsilon((1-\tau)N) + \frac{1}{2} \E_s\left[\int_a |\pi_\tau(a|s) - \beta(a|s)| \right]\\
        &= \varepsilon((1-\tau)N) + \frac{1}{2} \E_s\bigg[\int_a |\pi_\tau(a|s) - \beta(a|s)|\1[\widehat Q^\beta(s,a) < V^\beta_{\tau,\beta}(s)] \\ &\qquad \qquad\qquad + \int_a |\pi_\tau(a|s) - \beta(a|s)|\1[\widehat Q^\beta(s,a) \geq V^\beta_{\tau,\beta}(s)] \bigg]\\
        &= \varepsilon((1-\tau)N) + \frac{1}{2} \E_s\bigg[\int_a \beta(a|s) \1[\widehat Q^\beta(s,a) < V^\beta_{\tau,\beta}(s)] \\ &\qquad \qquad\qquad + \int_a (\frac{\beta(a|s)}{1-\tau} - \beta(a|s))\1[\widehat Q^\beta(s,a) \geq V^\beta_{\tau,\beta}(s)] \bigg]\\
        &= \varepsilon((1-\tau)N) + \frac{1}{2} \E_s\bigg[\tau + \frac{\tau}{1-\tau}\int_a \beta(a|s)\1[\widehat Q^\beta(s,a) \geq V^\beta_{\tau,\beta}(s)] \bigg]\\
         &= \varepsilon((1-\tau)N) + \frac{1}{2} \E_s\bigg[\tau + \frac{\tau}{1-\tau}(1-\tau) \bigg] \\
         &= \varepsilon((1-\tau)N) + \tau
    \end{align}
   Plugging this in yields the desired bound. 
\end{proof}

\end{document}